%% file: main.tex
\RequirePackage{fix-cm}
\documentclass{article}
\usepackage{iclr2027_conference,times}
\usepackage[hypertexnames=false]{hyperref}
\input{preamble}

\title{Who Bears the Burden? Learning\\Responsibility for Shared Constraints\\in Multi-Agent Reinforcement Learning}
\author{%
\setlength{\tabcolsep}{0pt}%
\begin{tabular}{@{}p{0.5\textwidth}p{0.5\textwidth}@{}}
\rule{0pt}{20pt}Xiaoyang Cao\thanks{Corresponding author.} & Jingqi Li \\
\mdseries Massachusetts Institute of Technology & \mdseries The University of Texas at Austin \\
\mdseries\texttt{xycao@mit.edu} & \mdseries\texttt{jingqi.li@austin.utexas.edu} \\[8pt]
Zhe Fu & Alexandre M. Bayen \\
\mdseries Stanford University & \mdseries University of California, Berkeley \\
\mdseries\texttt{zhefu@stanford.edu} & \mdseries\texttt{bayen@berkeley.edu}
\end{tabular}}
\iclrfinalcopy
\begin{document}
\maketitle
\lhead{Preprint}
\vspace{-1.5em}
\begin{center}
\textbf{Project page with code:} \href{https://lira-marl.github.io/}{\texttt{https://lira-marl.github.io}}
\end{center}
\input{sections/abstract}
\input{sections/introduction}
\input{sections/related_work}
\input{sections/method}

\input{sections/experiments}
\FloatBarrier
\input{sections/conclusion}
\input{sections/limitations}
\label{main-text-end}
\input{sections/reproducibility_statement}
\bibliographystyle{iclr2027_conference}
\bibliography{references}
\clearpage
\appendix
\input{appendix/a_equilibrium_selection}
\input{appendix/b_exact_examples}
\input{appendix/c_finite_gradient}
\input{appendix/e_implementation}
\input{appendix/f_experiments}
\end{document}

%% file: preamble.tex
\usepackage{amsmath,amssymb,amsthm,bm}
\usepackage{graphicx,booktabs,array,colortbl}
\usepackage{placeins,needspace}
\usepackage{pgfplots}
\pgfplotsset{compat=1.18}
\usepackage{algorithm,algpseudocode}
\usepackage{xurl}
\usepackage{etoolbox}%

\AtBeginDocument{%
  \abovedisplayskip=9pt plus 3pt minus 4pt
  \belowdisplayskip=9pt plus 3pt minus 4pt
  \abovedisplayshortskip=4pt plus 3pt
  \belowdisplayshortskip=6pt plus 3pt minus 3pt
}
\AtBeginEnvironment{algorithm}{\vspace{-2pt}}
\AtEndEnvironment{algorithm}{\vspace{-2pt}}
\usepackage{hyperref}
\definecolor{linkblue}{rgb}{0.0,0.25,0.55}
\hypersetup{colorlinks=true,linkcolor=linkblue,citecolor=linkblue,urlcolor=linkblue,filecolor=linkblue,
  pdfauthor={Xiaoyang Cao, Jingqi Li, Zhe Fu, Alexandre M. Bayen},
  pdftitle={Who Bears the Burden? Learning Responsibility for Shared Constraints in Multi-Agent Reinforcement Learning},
  pdfsubject={Lagrangian Responsibility Allocation (LiRA) for shared constraints in multi-agent reinforcement learning},
  pdfkeywords={multi-agent reinforcement learning, shared constraints, Lagrangian methods, generalized Nash equilibrium, responsibility allocation, social welfare}}
\newtheorem{theorem}{Theorem}
\newtheorem{proposition}{Proposition}

\theoremstyle{definition}

\theoremstyle{remark}

\newcommand{\E}{\mathbb E}
\newcommand{\R}{\mathbb R}
\newcommand{\calD}{\mathcal D}
\newcommand{\calA}{\mathcal A}
\newcommand{\calG}{\mathcal G}

\newcommand{\calU}{\mathcal U}
\newcommand{\dd}{\mathrm d}

\newcommand{\col}{\operatorname{col}}

\newcommand{\clip}{\operatorname{clip}}
\newcommand{\norm}[1]{\left\lVert#1\right\rVert}
\newcommand{\Dfix}{\mathrm D_{\phi}^{\mathrm{fix}}}
\newcommand{\DU}{\mathrm{DU}}
\newcommand{\SC}{\mathrm{SC}}

%% file: sections/abstract.tex
\begin{abstract}

When multiple agents share a cost budget, a common Lagrange multiplier can enforce the aggregate constraint but does not determine how its penalty should be allocated across agents. Uniform penalties ignore heterogeneity in the rewards agents sacrifice, while agent-specific multipliers may still rely on the same aggregate cost signal. We introduce \emph{\textbf{L}agrang\textbf{i}an \textbf{R}esponsibility \textbf{A}llocation} (LiRA), which learns each agent's share of a common multiplier by optimizing social welfare over a finite training horizon. The multiplier enforces the aggregate budget, while responsibility shares redistribute its influence without modifying the original rewards or constraints. For convex games under standard regularity conditions, varying these shares induces a smooth family of normalized generalized Nash equilibria in which active constraints remain at their budgets while welfare varies. To optimize responsibility before convergence, we derive a welfare gradient that accounts for both learning updates and the induced change in data distribution. Across CityLearn, MABIM, Harvest, and MetaDrive, spanning 3 to 400 agents, LiRA improves average social welfare by up to 29\% over uniform and agent-specific multiplier baselines. Grid and driving costs remain within budget, inventory violations decrease, and Harvest makes more effective use of available budget.

\end{abstract}

%% file: sections/introduction.tex
\section{Introduction}\label{sec:introduction}

\input{sections/new_intro}

\input{sections/figures/teaser}
\FloatBarrier

We test LiRA on four multi-agent tasks, including building energy
management in CityLearn~\citep{nweye2024citylearn} with 3 agents,
inventory control in MABIM~\citep{yang2023mabim} with 400 agents,
resource harvesting in Harvest (Melting Pot \nolinkurl{commons_harvest__open}~\citep{agapiou2022melting}) with 7 agents,
and autonomous driving in MetaDrive~\citep{li2022metadrive} with 4 agents.
LiRA has the highest mean welfare in all four tasks
(Section~\ref{sec:main-results}). 
It keeps grid costs within budget in CityLearn, reduces safety violations in MetaDrive, and cuts order-rejection penalties in MABIM.
A matched CityLearn episode (Section~\ref{sec:allocation-stability})
traces the gain to individual buildings: the two that overcool receive
a larger share of the grid-electricity penalty, cool less, and recover
comfort while district electricity falls.

%% file: sections/new_intro.tex
Consider two delivery robots approaching a narrow corridor. One carries an urgent package, while the other is traveling to collect a package. Either robot may yield to avoid a collision, but delaying the loaded robot incurs a larger loss. A shared collision-avoidance constraint excludes unsafe outcomes, but does not determine how the burden of satisfying the constraint should be allocated between the robots. This ambiguity arises more broadly in multi-agent systems subject to shared constraints, including power limits, resource budgets, and safety requirements, when agents incur heterogeneous costs for adapting their behavior.

A shared Lagrange multiplier specifies the strength of the penalty associated with a common constraint, but does not determine how that penalty should be apportioned across agents. Applying the same penalty to all agents leaves this allocation implicit. Introducing agent-specific multipliers does not resolve the issue when each multiplier is driven by the same aggregate constraint cost. We instead make the allocation of constraint responsibility explicit. Specifically, a responsibility share determines the fraction of the shared penalty applied to each agent's objective, while the common multiplier continues to enforce the aggregate constraint. The original rewards and constraint budget remain unchanged. Responsibility allocation therefore modifies which agents are incentivized to adapt without altering the underlying constrained problem.

This additional degree of freedom induces a nontrivial equilibrium-selection problem. Under convexity, monotonicity, and standard regularity conditions, varying the responsibility shares generates a smooth family of normalized generalized Nash equilibria. Along this family, each active constraint remains satisfied at its budget, while social welfare may vary (Theorem~\ref{thm:selection}). Motivated by this observation, we introduce \emph{\textbf{L}agrang\textbf{i}an \textbf{R}esponsibility \textbf{A}llocation} (LiRA), which optimizes responsibility shares to select among feasible equilibria according to social welfare.

LiRA does not require the underlying agents to converge to equilibrium before responsibility allocation can be optimized. Instead, it evaluates a candidate allocation through the welfare obtained after a finite lookahead of the learning dynamics. Changing the responsibility shares influences both the policy updates within this lookahead and the distribution of data collected for subsequent updates. We derive a gradient estimator that accounts for both effects by differentiating through the learning dynamics and correcting for the induced change in the sampling distribution (Theorem~\ref{thm:gradient}). Independent lookaheads initialized from a common training checkpoint allow this gradient to be estimated without modifying the primary learning trajectory. Figure~\ref{fig:teaser} summarizes the resulting framework, from responsibility allocation in the motivating corridor example to its implementation within the learning procedure.

%% file: sections/figures/teaser.tex
\begin{figure}[t]
\centering
\includegraphics[width=\linewidth]{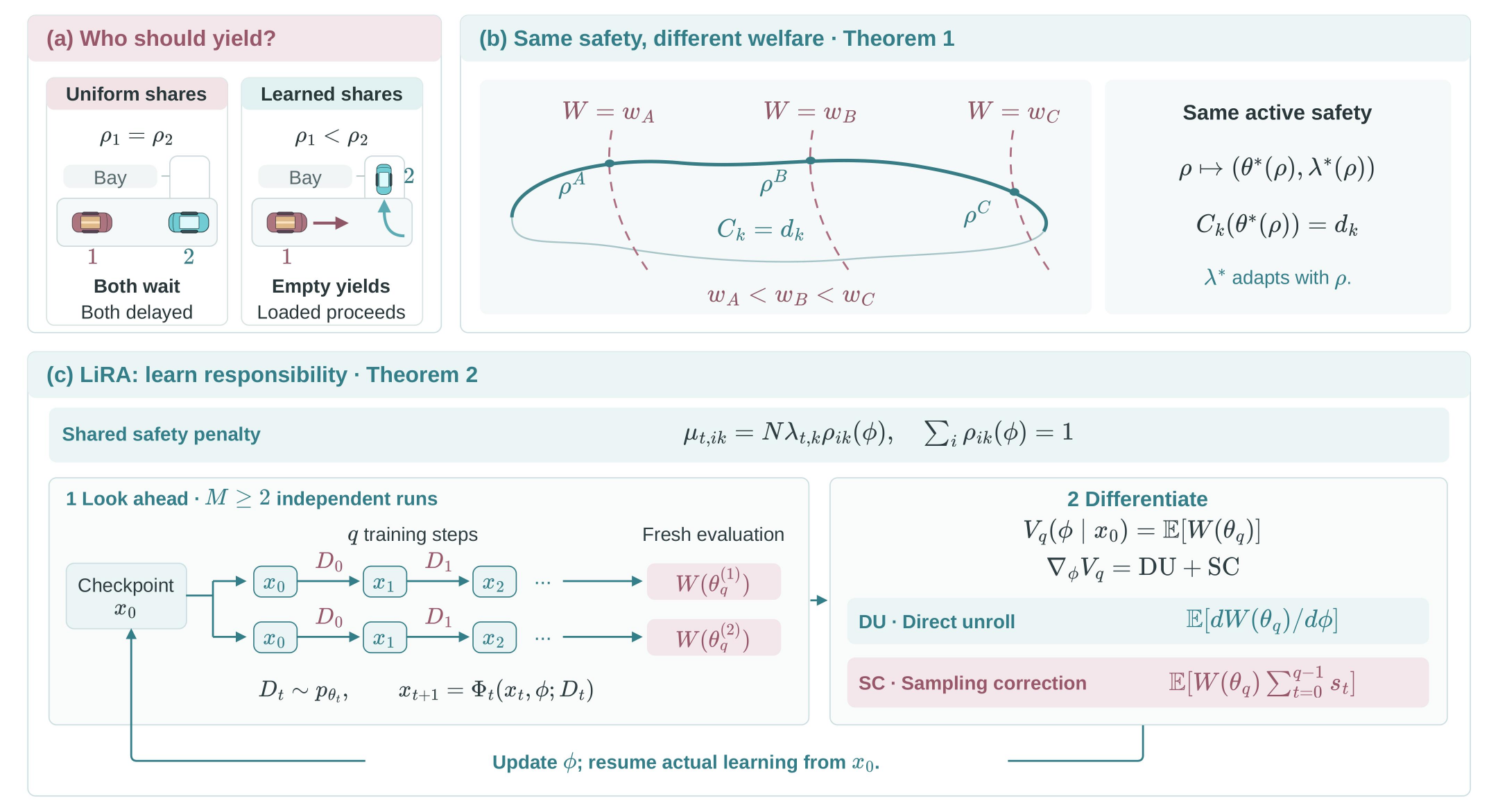}
\caption{\textbf{Overview of LiRA.}
\textbf{(a)} In the illustrated corridor scenario, uniform shares leave both
robots waiting; learned shares let the robot without an urgent package yield so that the loaded
robot proceeds. Both outcomes avoid collision but differ in welfare.
\textbf{(b)} Under Theorem~\ref{thm:selection}'s conditions, varying
responsibility $\rho$ selects a smooth equilibrium family. For each active
constraint $k$, cost $C_k$ stays at its budget $d_k$, while welfare $W$ can
vary and the shared equilibrium multiplier $\lambda^*$ adapts to $\rho$.
\textbf{(c)} From checkpoint $x_0$, LiRA runs $M\ge2$ independent lookaheads
of $q$ learner updates, each followed by fresh evaluation. It combines
direct unrolling (DU) with sampling correction (SC) to estimate $\nabla_\phi V_q$, update $\phi$, and restart from $x_0$.}
\label{fig:teaser}
\end{figure}

%% file: sections/related_work.tex
\section{Related Work}\label{sec:related-work}

\paragraph{Shared safety in multi-agent learning.}
Constrained reinforcement learning trades reward against explicit cost
budgets \citep{altman1999constrained}. Single-agent methods enforce
constraints via trust regions \citep{achiam2017constrained,yang2020projection}
or Lagrangian penalties \citep{tessler2019rcpo,stooke2020pid}. With
multiple agents, one learner's update shifts the feasible set for the
others. Constrained policy optimization and decentralized descent--ascent
address these coupled dynamics
\citep{gu2021macpo,lu2021decentralized}, and regret and violation bounds
exist for certain two-player settings \citep{ding2023provably}. All of
these methods enforce the shared cost, but none asks which agents
should absorb it. In multi-agent control, responsibility allocations
inferred from demonstrations through control barrier functions explain
how much each agent deviates to keep an interaction safe
\citep{remy2025responsibility}. LiRA keeps the shared cost feedback and
instead learns that distributional choice for social welfare during
reinforcement learning.

\paragraph{Equilibria under shared constraints.}
When agents share constraints, the feasible set of each depends on the
others' choices. Generalized Nash equilibria formalize such
interdependence \citep{facchinei2010generalized}. Rosen's normalized
equilibria fix the ratios between agents' constraint multipliers
\citep{rosen1965existence} and parameterized variational inequalities trace
the resulting family \citep{nabetani2011parametrized,dreves2018select}.
In dynamic games with shared constraints, non-normalized equilibria can
also be computed and a preferred one selected by a design criterion
\citep{pustilnik2025generalized}.
Recent safe-learning work studies convergence to such equilibria under
state-wise constraints \citep{li2024safe}. Our responsibility shares
set the multiplier ratios directly. Welfare then selects
among the equilibria they index.

\paragraph{Learning through the agents' response.}
Varying allocations can improve welfare at equilibrium,
but agents are still learning when shares must be chosen. Incentive design
modifies rewards in anticipation of agents' adaptive response
\citep{yang2020incentivize,yang2022adaptive,zheng2022economist}.
Bilevel formulations and differentiable equilibrium blocks offer
complementary approaches
\citep{thoma2024contextual,shen2024penalty,thoma2026deep}. LiRA differs
in that it allocates the penalty of an existing constraint rather
than designing new rewards. Its gradient combines
meta-gradient differentiation through learner updates
\citep{xu2018metagradient,franceschi2017forward,grefenstette2019gimli}
with a sampling correction for the data collected by intermediate
policies \citep{vuorio2022bias}. Iterative and implicit hypergradients
connect finite-window responses to equilibrium sensitivity
\citep{grazzi2020iteration}.

%% file: sections/method.tex
\section{Method}\label{sec:method}
The Lagrangian penalty for a shared constraint has a scale, set by the
multiplier, and a distribution across agents, usually left at $1/N$.
LiRA adds a learnable responsibility vector that controls the
distribution as the multiplier continues to track aggregate cost.
We formalize this allocation, show that it selects among equilibria with
different welfare at the same active costs, and derive a gradient for
choosing shares from a finite training window.
\input{sections/method/01_formulation}
\input{sections/method/02_equilibrium_selection}
\input{sections/method/03_responsibility_gradient}

\input{sections/method/04_algorithm}

%% file: sections/method/01_formulation.tex
\subsection{Responsibility allocation}\label{sec:formulation}
Consider a Markov game with $N$ agents and $K$ shared constraints. Agent $i$
uses policy $\pi_{\theta_i}(a_i\mid o_i)$, where $a_i$ is the action and $o_i$ is the local observation. The joint policy factorizes
across agents and induces trajectory distribution $p_\theta$, where $\theta:=[\theta_i]_{i=1}^N$. On a
trajectory $\tau$ of length $H$, let $r_{i,h}$ denote agent $i$'s reward
and $c_{k,h}$ the shared cost for constraint $k$ at step $h$. With discount
factor $\gamma\in[0,1]$ and budget $d_k$, define
\begin{equation}
 J_i(\theta)=\E_{\tau\sim p_\theta}\!\left[\sum_{h=0}^{H-1}\gamma^h r_{i,h}\right],
 \qquad
 C_k(\theta)=\E_{\tau\sim p_\theta}\!\left[\sum_{h=0}^{H-1}\gamma^h c_{k,h}\right]\le d_k.
 \label{eq:objectives}
\end{equation}
We measure social welfare using the original rewards,
$W(\theta)=\sum_i\alpha_iJ_i(\theta)$, with fixed positive weights
$\alpha_i$.

LiRA introduces a responsibility allocation. Let
$\rho_{ik}(\phi)$ be agent $i$'s share of the penalty for constraint $k$,
with $\rho_{ik}\ge0$ and $\sum_i\rho_{ik}=1$. At learner update $t$, the
shared multiplier $\lambda_{t,k}\ge0$ sets the overall penalty scale. Agent
$i$ receives coefficient
$\mu_{t,ik}=N\lambda_{t,k}\rho_{ik}(\phi)$ and trains on
\begin{equation}
 r^{\mathrm{eff}}_{i,h;t}
 =r_{i,h}-\sum_{k=1}^K\mu_{t,ik}c_{k,h}.
 \label{eq:allocation}
\end{equation}
Equal shares, $\rho_{ik}=1/N$, recover the usual Lagrangian coefficient
$\mu_{t,ik}=\lambda_{t,k}$. A larger $\rho_{ik}$ makes constraint $k$
matter more to agent $i$'s update. It does not alter the reward, cost,
or budget used to evaluate the resulting policies.
We optimize unconstrained logits $\phi_{ik}$ and map them to these
nonnegative shares with a softmax. A small positive floor can keep
every agent exposed to the shared penalty. Appendix~\ref{app:allocation-implementation}
gives the exact map.

The multiplier retains the standard Lagrangian primal--dual update from
constrained reinforcement learning
\citep{altman1999constrained,tessler2019rcpo,stooke2020pid}. At a scheduled
dual update, let $\widehat C_{t,k}$ be the estimated cost on the same scale
as its budget $d_k$, let $\beta_\lambda>0$ be the dual step size, and let
$[z]_+=\max\{z,0\}$. The shared multiplier follows
\begin{equation}
 \lambda_{t+1,k}
 =[\lambda_{t,k}+\beta_\lambda(\widehat C_{t,k}-d_k)]_+.
 \label{eq:dual}
\end{equation}
Each multiplier sees aggregate cost, regardless of who receives its
penalty. The factor $N$ in $\mu_{t,ik}$ ensures that equal shares
recover the original coefficient, so varying $\rho$ redistributes
pressure rather than rescaling it uniformly. We next show that this
redistribution can change welfare even when an active constraint stays
at its budget.

%% file: sections/method/02_equilibrium_selection.tex
\subsection{Welfare varies along the cost surface}\label{sec:selection}
In Figure~\ref{fig:teaser}(a), either robot can yield. A feasible
outcome need not be the best one for their combined rewards. The
constraint says what is forbidden, not who should give way.
Generalized Nash equilibria formalize such constrained best responses
\citep{facchinei2010generalized}. Among them, a normalized
equilibrium fixes the ratios of agents' constraint multipliers
\citep{rosen1965existence}. The responsibility shares set exactly those
ratios, so changing $\rho$ selects a different equilibrium under the
same shared constraint.

To make this precise, consider a positive responsibility vector $\rho=(\rho_i)_i$, with
$\rho_{ik}=\rho_i$ for every constraint $k$. Let $\mathcal R$ be an interior region of the simplex,
$\theta^*(\rho)$ the selected normalized equilibrium, and
$\lambda^*(\rho)$ its shared multipliers. Let $\calA$ denote the active shared
constraints. The regularity assumptions are collected in
Appendix~\ref{app:selection-assumptions}.

\begin{samepage}
\begin{theorem}[Equilibrium selection at fixed active costs]\label{thm:selection}
Under assumptions (A1)--(A3) in Appendix~\ref{app:selection-assumptions},
each $\rho\in\mathcal R$ selects a unique normalized equilibrium, and
$\theta^*(\rho)$ and $\lambda^*(\rho)$ are $C^1$ in $\rho$. Every
active shared cost remains at its prescribed budget:
\begin{equation}
 C_k(\theta^*(\rho))=d_k
 \qquad (k\in\calA).
 \label{eq:same-safety}
\end{equation}
If welfare has a nonzero derivative along the simplex at $\rho$, then nearby
responsibility allocations attain different welfare at those same active
costs.
\end{theorem}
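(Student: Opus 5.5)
The plan is to recast the normalized equilibrium as the solution of a parameterized variational inequality, in the style of Rosen, and then apply the implicit function theorem to its KKT system. Fix $\rho\in\mathcal R$ and write the pseudo-gradient weighted by the responsibility shares,
\[
F_\rho(\theta)=\col\bigl(\rho_i^{-1}\nabla_{\theta_i}(-J_i(\theta))\bigr)_{i=1}^N .
\]
A normalized equilibrium with weights $\rho$ is a point $\theta^*$ such that, for every $i$, $\theta_i^*$ minimizes $-J_i+\sum_k N\lambda_k\rho_i C_k$ over $\theta_i$, given $\theta_{-i}^*$. Dividing agent $i$'s stationarity condition by $N\rho_i>0$ gives exactly the KKT system of the variational inequality $\mathrm{VI}(F_\rho,\{C_k\le d_k\})$, with a single shared multiplier vector $\lambda$. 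Uniqueness then follows from assumption (A1), which I take to be convexity of $-J_i$ in $\theta_i$, convexity of the $C_k$, and strict (or strong) monotonicity of $F_\rho$ uniformly on $\mathcal R$. Existence follows from compactness or coercivity, also part of (A1): the standard Rosen diagonal-strict-concavity argument shows that two solutions $\theta,\theta'$ give $\langle F_\rho(\theta)-F_\rho(\theta'),\theta-\theta'\rangle\le0$, which contradicts strict monotonicity. Uniqueness of $\lambda^*$ comes from LICQ on the active set, which I take to be in (A2).

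For smoothness, I restrict the KKT system to the active set $\calA$ and write it as
\[
\Phi(\theta,\lambda_\calA;\rho)=
\begin{pmatrix}F_\rho(\theta)+\sum_{k\in\calA}\lambda_k\nabla C_k(\theta)\\ C_\calA(\theta)-d_\calA\end{pmatrix}=0 .
\]
Inactive constraints carry $\lambda_k=0$. Strict complementarity, assumed in (A2)/(A3), guarantees that the active set is locally constant in $\rho$. The main obstacle is showing that the Jacobian
\[
\begin{pmatrix}H & G^\top\\ G & 0\end{pmatrix}
\]
is nonsingular, where $H=\partial_\theta F_\rho+\sum_k\lambda_k\nabla^2C_k$ and $G=\nabla C_\calA$. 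My plan is to use the standard saddle-point argument. Suppose $(u,v)$ lies in the kernel. Then $Gu=0$ and $Hu+G^\top v=0$, so $u^\top Hu=0$. Since $H$ is positive definite on $\ker G$ (from strong monotonicity plus convexity of the $C_k$ with $\lambda\ge0$, or from the second-order condition in (A3)), this forces $u=0$. Then $G^\top v=0$, and LICQ forces $v=0$.

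$C^2$ data then make $\Phi$ a $C^1$ map in $\rho$ on the interior region, because $\rho_i^{-1}$ is smooth there. The implicit function theorem therefore gives $C^1$ maps $\theta^*(\rho),\lambda^*(\rho)$. By the uniqueness established above, these coincide with the selected equilibrium, and sign preservation keeps $\lambda^*_k>0$ and the inactive constraints slack in a neighborhood. Equation~\eqref{eq:same-safety} is then just the second block row of $\Phi=0$.

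Finally, $W\circ\theta^*$ is $C^1$ on $\mathcal R$. If its derivative along some tangent direction $v$ of the simplex ($\sum_i v_i=0$) is nonzero at $\rho$, a first-order Taylor expansion gives $W(\theta^*(\rho+sv))\neq W(\theta^*(\rho))$ for all small $s\neq0$. Since $\rho+sv\in\mathcal R$ and \eqref{eq:same-safety} holds at each of these points, the active costs are identical across them while the welfare differs.
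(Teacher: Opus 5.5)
Your overall route matches the paper's. You pose the normalized game as a variational inequality with a $\rho$-weighted field, get existence from compactness and uniqueness from strong monotonicity, and apply the implicit function theorem to the active KKT system via the bordered-matrix kernel argument. Continuity then preserves multiplier signs and slackness, and complementarity gives $C_k(\theta^*(\rho))=d_k$. (Your field uses $\rho_i^{-1}$ instead of $(N\rho_i)^{-1}$, so your VI multiplier is $N\lambda$; that is harmless.)

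The genuine gap is that you drop the strategy sets $\Theta_i$. You pose the VI over $\{C_k\le d_k\}$ and build $\Phi$ from the shared constraints alone. In this setting, however, $J_i$ and $C_k$ are only given near $\Theta$, strong monotonicity holds only on $\Theta$, and your existence step relies on the compactness of $\Theta$. The feasible set must therefore be $\mathcal K=\{\theta\in\Theta: C_k(\theta)\le d_k\}$. Assumption (A3) explicitly allows local bounds to be active at the equilibrium. At such a point, stationarity contains a normal-cone term from $\partial\Theta$, so $F_\rho(\theta^*)+\sum_{k\in\calA}\lambda_k\nabla C_k(\theta^*)\neq 0$ in general. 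Your $\Phi$ then does not vanish along the equilibrium branch, and the implicit function theorem is applied to the wrong system.

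The repair, which is what the paper does, is to add the active owner-separable bounds $h_{\mathcal L}(\theta)=0$ with multipliers $\nu_{\mathcal L}$. The first block becomes $F_\rho+G_C^\top\lambda_\calA+G_h^\top\nu_{\mathcal L}$, the border becomes $G=\col(G_C,G_h)$, and $H$ gains $\sum_\ell\nu_\ell\nabla^2 h_\ell\succeq 0$. Your kernel argument then goes through unchanged using linear independence of the combined active gradients. Owner-separability matters here: scaling agent $i$'s block by $1/(N\rho_i)$ only rescales that agent's own local multipliers, so KKT points remain constrained best responses.

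A smaller ordering point: before invoking uniqueness to identify the implicit-function branch with $\theta^*(\rho)$, you should first show that the branch solves the full VI. That follows from sign persistence, strict slackness of inactive inequalities, and sufficiency of KKT under convexity.
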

The proof is in Appendix~\ref{app:selection-proof}.
\end{samepage}

The mechanism is visible in the agents' first-order conditions.
A larger share raises the constraint's marginal price for one agent
relative to another. When their marginal rewards for yielding differ,
the joint policy shifts along the active cost surface. Total cost
stays at the budget because the shared multiplier compensates, but the
agents divide the adjustment differently
(Figure~\ref{fig:teaser}(b)). The theorem makes this precise through
the nonzero welfare derivative. It does not guarantee that every
direction improves welfare, only that some direction does whenever the
derivative is nonzero.

Appendix~\ref{app:independent} extends the result locally to separate
shares for each constraint. During training, however, the agents have
not settled at an equilibrium, and we need to choose an allocation
from the response they actually produce.

%% file: sections/method/03_responsibility_gradient.tex
\subsection{Choosing shares during training}\label{sec:gradient}
Theorem~\ref{thm:selection} identifies the opportunity at equilibrium.
To realize it during training, LiRA evaluates a proposed allocation by
the welfare it produces after a finite number of updates, making the
training process itself the object we differentiate.

Let $x_t$ collect everything needed to resume training at update $t$,
including the policies, critics, optimizer state, and shared multipliers.
Let $\calD_t$ be the batch collected at that update, and let $\Phi_t$ denote
one differentiable lookahead step. Then
\begin{equation}
 \calD_t\sim p_{\theta_t}(\calD),\qquad
 x_{t+1}=\Phi_t(x_t,\phi;\calD_t).
 \label{eq:learner}
\end{equation}
The lookahead map may update the actors, critics, optimizer state, and
shared multipliers according to the chosen probe. It need only be
differentiable along a sampled training window. Our experiments use
proximal policy optimization (PPO)
\citep{schulman2017ppo}.

Starting from checkpoint $x_0$, let $q$ be the lookahead horizon. Holding the
allocation fixed for these $q$ updates gives the finite-training objective
\begin{equation}
 \max_\phi V_q(\phi\mid x_0),\qquad
 V_q(\phi\mid x_0)=\E_{\calD_{0:q-1}}[W(\theta_q)].
 \label{eq:meta}
\end{equation}
The objective measures welfare after the next $q$ updates under the
proposed shares, using the agents' original rewards.

Changing a share changes the update computed from a fixed training batch. The updated
policy then collects a different batch for the next update. To
account for this second path, define the training-batch score
\begin{equation}
 s_t=\frac{\dd}{\dd\phi}\log p_{\theta_t}(\calD_t),
 \label{eq:training-score}
\end{equation}
where the total derivative follows the recorded learner updates for a fixed
realized training history.

The distinction first appears after an update has changed the policy.
With $x_0$ fixed, the initial batch is collected before the proposed
allocation has changed any policy. A second batch can reflect the
allocation through the first update. A two-update lookahead is
therefore the shortest window that exposes both the direct update
and the changed-data path. Longer windows follow more of the ensuing
response but also require more sampled updates.

\begin{samepage}
\begin{theorem}[Finite-training responsibility gradient]\label{thm:gradient}
Suppose $x_0$ is fixed and the learner and sampling laws satisfy the
regularity conditions in Appendix~\ref{app:gradient}. Then
\begin{equation}
 \nabla_\phi V_q
 =\underbrace{\E\!\left[\frac{\dd W(\theta_q)}{\dd\phi}\right]}_{g^{\DU}}
 +\underbrace{\E\!\left[W(\theta_q)\sum_{t=0}^{q-1}s_t\right]}_{g^{\SC}}.
 \label{eq:gradient}
\end{equation}
\end{theorem}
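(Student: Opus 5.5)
The approach is a pathwise/likelihood-ratio decomposition. We write $V_q$ as an integral over the full training history $\calD_{0:q-1}$ and differentiate under the integral sign. Because $x_0$ is fixed, the learner recursion \eqref{eq:learner} makes each $x_t$, hence each $\theta_t$, a deterministic differentiable function $x_t=x_t(\phi;\calD_{0:t-1})$ of the shares and of the batches already realized. The joint law of the history factorizes sequentially,
\begin{equation*}
 P_\phi(\calD_{0:q-1})=\prod_{t=0}^{q-1} p_{\theta_t(\phi;\calD_{0:t-1})}(\calD_t),
\end{equation*}
so that
\begin{equation*}
 V_q(\phi\mid x_0)=\int W\bigl(\theta_q(\phi;\calD_{0:q-1})\bigr)\,P_\phi(\calD_{0:q-1})\,\dd\calD_{0:q-1}.
\end{equation*}
The integrand depends on $\phi$ in two places. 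The welfare evaluated at the final parameters carries the update path. The sampling density of every batch after the first carries the changed-data path.

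\textbf{Step 1: interchange differentiation and integration.} I would take the regularity conditions in Appendix~\ref{app:gradient} to supply exactly what is needed for this. They should guarantee that the maps $\Phi_t$ and $\theta\mapsto p_\theta(\calD)$ are $C^1$, and that there exists a $\phi$-locally integrable dominating function for $\bigl|\tfrac{\dd}{\dd\phi}[W(\theta_q)P_\phi]\bigr|$. Dominated convergence, in the form of Leibniz's rule for parametric integrals, then justifies
\begin{equation*}
 \nabla_\phi V_q=\int \frac{\dd}{\dd\phi}\bigl[W(\theta_q)P_\phi\bigr]\,\dd\calD_{0:q-1}.
\end{equation*}
This step is where the real work lies, because the densities depend on $\phi$ through the composed learner updates. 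The plan is to bound the derivative of the composition using the chain rule together with the assumed bounds on the Jacobians of $\Phi_t$ and on the score $\nabla_\theta\log p_\theta$, and to combine these with the boundedness of $W$. If the batch space is discrete or finite, as for finite-horizon trajectories with finite action sets, the interchange is immediate. In the general case I would assume an integrable envelope.

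\textbf{Step 2: product rule.} Differentiating the integrand gives
\begin{equation*}
 \frac{\dd W(\theta_q)}{\dd\phi}P_\phi+W(\theta_q)\,\frac{\dd P_\phi}{\dd\phi}.
\end{equation*}
Integrating the first term against $P_\phi$ yields $g^{\DU}$. Here $\tfrac{\dd W(\theta_q)}{\dd\phi}$ is the total derivative through the unrolled updates with the batches held fixed, computed as $\nabla_\theta W(\theta_q)\,\tfrac{\dd\theta_q}{\dd\phi}$ via the chain rule across $\Phi_0,\dots,\Phi_{q-1}$.

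\textbf{Step 3: score identity.} For the second term, on the set where $P_\phi>0$ we use $\tfrac{\dd P_\phi}{\dd\phi}=P_\phi\,\tfrac{\dd}{\dd\phi}\log P_\phi$. By the factorization,
\begin{equation*}
 \frac{\dd}{\dd\phi}\log P_\phi=\sum_{t=0}^{q-1}\frac{\dd}{\dd\phi}\log p_{\theta_t}(\calD_t)=\sum_{t=0}^{q-1}s_t,
\end{equation*}
which is exactly the training-batch score \eqref{eq:training-score} with the total derivative following the recorded updates. This produces $g^{\SC}$.

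We also need absolute continuity, meaning the support of $P_\phi$ does not depend on $\phi$, so that the points where $P_\phi=0$ contribute nothing. I would take this from the assumed positivity of the policies, for example softmax or Gaussian policies.

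As a consistency check, $s_0=0$ because $\theta_0$ is fixed by $x_0$. This matches the paper's remark that the changed-data path first appears at $q\ge2$.
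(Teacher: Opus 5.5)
Your proposal is correct and follows the paper's route. You factor the history law as $\prod_t p_{\theta_t(\phi;\calD_{0:t-1})}(\calD_t)$ and differentiate under the integral. The paper simply assumes this interchange, along with parameter-independent support and almost-sure differentiability, so your envelope argument goes beyond what it needs. You then split the result by the product rule and the log-derivative identity into $g^{\DU}$ and $g^{\SC}$, as the paper does.

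Your observation that $s_0=0$ also matches the paper's remark that a two-update lookahead is the shortest window exposing the sampling path.
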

The proof is in Appendix~\ref{app:gradient}.
\end{samepage}

The direct-unrolling (DU) term differentiates the recorded updates
at fixed data. Its complement, the sampling-correction (SC)
term, accounts for the data collected by intermediate policies
\citep{yang2020incentivize,yang2022adaptive,vuorio2022bias}.
Policy scores let us estimate the two terms without a simulator derivative.
Appendix~\ref{app:gradient} gives the derivation.

%% file: sections/method/04_algorithm.tex
\subsection{Estimating the LiRA update}\label{sec:algorithm}
The gradient in Eq.~\eqref{eq:gradient} can be estimated without
committing any proposed update to the live learner. From its current
checkpoint, LiRA runs $M\ge2$ independent continuations, each for $q$
updates at the same allocation. For continuation $m$, let
$\widehat W_m$ be its welfare on fresh terminal rollouts,
$\widehat g_m^{\DU}$ the direct-response estimate, and
$s^{(m)}=\sum_{t=0}^{q-1}s_t^{(m)}$ the score of its training history.

A lookahead of $q$ updates sets the response used to choose responsibility.
After that, $u$ live updates advance the learner before the next choice.
Update schedules for the two phases can differ. Appendix~\ref{app:experimental-protocol}
specifies the CityLearn probe and live schedules.

For its sampling term, continuation $m$ uses the other continuations'
mean welfare as a baseline,
\begin{equation}
 b_m=\frac{1}{M-1}\sum_{j\ne m}\widehat W_j.
 \label{eq:baseline}
\end{equation}
This gives
\begin{equation}
 \widehat g=\frac1M\sum_{m=1}^M
 \left[\widehat g_m^{\DU}+(\widehat W_m-b_m)s^{(m)}\right].
 \label{eq:loo}
\end{equation}
Because $b_m$ is independent of the score from continuation $m$, centering
preserves the expected gradient (Appendix~\ref{app:loo}). For an exact
differentiable lookahead, Eq.~\eqref{eq:loo} is unbiased. In CityLearn
we use the stopped-batch approximation described in
Appendix~\ref{app:lookahead-differentiates}, while holding the shared
multiplier fixed over the short probe to isolate responsibility
redistribution at the current dual scale.

Terminal welfare is estimated from the original rewards on fresh
rollouts. Its direct gradient follows the final policy's score back
through the recorded learner updates. The history score accumulates
the policy scores of batches collected during the lookahead. Reusing a
training batch for multiple PPO epochs does not create new sampled
actions in that history. These distinctions keep the two terms in
Eq.~\eqref{eq:loo} tied to the same finite response.

\begin{algorithm}[H]
\caption{LiRA: learning responsibility from finite continuations}
\label{alg:lira}
\begin{algorithmic}[1]
\Require Checkpoint $x_0$, allocation logits $\phi_0$, outer cycles $L$,
lookahead $q$, independent continuations $M\ge2$, live updates per cycle $u$,
step size $\eta_\rho$.
\For{$\ell=0,\ldots,L-1$}
  \For{$m=1,\ldots,M$}
    \State Restore $x^{(m)}_0\gets x_\ell$ and set $s^{(m)}\gets0$.
    \For{$t=0,\ldots,q-1$}
      \State Collect a fresh batch $\calD_t^{(m)}$ with the policy in $x_t^{(m)}$.
      \State $s^{(m)}\gets s^{(m)}+s_t^{(m)}$, the batch score in
        Eq.~\eqref{eq:training-score}.
      \State $x_{t+1}^{(m)}\gets
        \Phi_t(x_t^{(m)},\phi_\ell;\calD_t^{(m)})$.
    \EndFor
    \State On fresh terminal rollouts, estimate $\widehat W_m$ and
      $\widehat g_m^{\DU}$.
  \EndFor
  \State Compute $b_m$ for each $m$ and $\widehat g$ by Eq.~\eqref{eq:loo}.
  \State Ascend $\phi_\ell$ along $\widehat g$ with step size $\eta_\rho$
    to obtain $\phi_{\ell+1}$.
  \State Restore $x_\ell$; train the live learner for $u$ fresh updates
    under $\rho(\phi_{\ell+1})$, obtaining $x_{\ell+1}$.
\EndFor
\State \textbf{return} the final policy in $x_L$ and $\rho(\phi_L)$.
\end{algorithmic}
\end{algorithm}

The continuations are discarded after the allocation step. Only the
fresh live updates produce the next checkpoint, including its optimizer
and dual state. One outer cycle thus takes $Mq$ lookahead learner
updates and $u$ live updates, plus terminal rollouts. Appendix~\ref{app:implementation}
specifies the lookahead implementation.

%% file: sections/experiments.tex
\section{Experiments}\label{sec:experiments}
The theory above assumes convex games with smooth payoffs. We now test
whether LiRA produces welfare gains when agents train with nonlinear
function approximators for a finite time. Four tasks test welfare and
cost across different scales and constraint types. CityLearn then lets
us inspect individual agents. Throughout, we read costs alongside
rewards to distinguish gains obtained by spending unused budget from gains at
nearly the same shared cost.

\subsection{Welfare and shared costs across tasks}\label{sec:main-results}
The shared constraint plays a different role in each of our four tasks.
In CityLearn v2~\citep{nweye2024citylearn}, three buildings control their
energy devices for thermal comfort. They draw electricity from the same
district, so their combined net demand faces one grid cap: reducing
demand in one building can relieve a cost borne by all three. In
MABIM~\citep{yang2023mabim}, 400 product agents manage replenishment across an inventory network to maximize total profit, subject to shared order-rejection caps across two warehouse-store links.
An allocation now distributes pressure across hundreds of decisions
and two distinct constraints. Harvest, the Melting Pot~\citep{agapiou2022melting} substrate
\nolinkurl{commons_harvest__open} (built on Lab2D~\citep{beattie2020lab2d}), has seven agents collecting a common
resource under one shared cost. MetaDrive~\citep{li2022metadrive} has
four driving agents and a shared limit on the rate of safety events.
These tasks differ in scale and in what an agent gives up to reduce
cost.
In all four benchmarks, we set $\alpha_i=1$ for every agent, so welfare
is the sum of agents' original returns.

We compare LiRA with two baselines. Uniform responsibility
sets $\rho_{ik}=1/N$, and each agent receives
an equal share of each shared multiplier. The per-agent Lagrangian
baseline (PAL) instead learns a separate multiplier for each agent.
LiRA keeps a shared multiplier and learns its responsibility shares.
Within each task, methods use the same inner learner and matched training
seeds under a common evaluation protocol.
Table~\ref{tab:main-results} gives held-out means and
sample standard deviations over three training seeds, or six for
MetaDrive. Rewards and costs retain their native units: welfare is
compared within each task, and each cost against its own budget.
MABIM's rejection budgets are calibrated on validation data, with
eight matched action seeds in testing. MetaDrive starts each method
from the same actor checkpoint and uses 13 held-out episode seeds.
Appendix~\ref{app:experimental-protocol} gives further details of the protocol.

\input{sections/tables/experimental_results}

LiRA's mean welfare is highest in every domain
(Table~\ref{tab:main-results}), but the cost response differs in each.

\textbf{CityLearn.}
Uniform and PAL both return $-16.87$k in mean comfort reward. PAL
starts with equal multipliers and receives identical district-cost
feedback, so its multipliers stay equal. LiRA raises the welfare to
$-14.51$k, and the mean grid exceedance moves from $28.44$ to $28.73$
kWh, against a budget of $28.76$~kWh. The gain comes at nearly the same
district cost. Section~\ref{sec:allocation-stability} asks what the
buildings do differently to obtain it.

\textbf{MABIM.} With 400 agents and two constraints, LiRA reaches
$-591.36$ million versus Uniform's $-596.31$ million. It also
reduces the more strained rejection cost from $21.51$ to $21.30$
million. Both still exceed the budget of $20.35$~million, so the welfare
gain accompanies a cost improvement rather than a cost trade.

\textbf{Harvest.} The shared cost is well within budget: $33$--$37$
versus $68$. The welfare rises from $42.73$ (Uniform)
to $50.27$ using the unused allowance.

\textbf{MetaDrive.} LiRA reaches
$64.26$ in welfare (Uniform: $49.72$; PAL: $60.94$) and records
$0.890$ safety events per thousand steps, below both the $1.00$ limit
and Uniform's $1.017$. PAL improves welfare, but at $1.337$ events.
In MetaDrive, learning shares of a common penalty outperforms individually learned
penalties in both mean welfare and safety.

\pagebreak[3]
\subsection{What the buildings do differently}\label{sec:allocation-stability}
\input{sections/figures/citylearn_mechanism}
\input{sections/tables/citylearn_mechanism}
Each CityLearn building earns a thermal-comfort reward. The shared
cost accumulates the positive excess of district net electricity over a
per-step grid cap across $719$ steps, with a budget of $28.76$~kWh.
For an overcooled building, easing cooling serves both objectives at
once. It saves electricity and improves comfort. For a building that is already
too warm, the comfort effect reverses.

Figure~\ref{fig:citylearn-mechanism} follows a matched 719-step
evaluation under Uniform and LiRA. It folds the temperatures into
hour-of-day means, while Table~\ref{tab:citylearn-mechanism} reports
the allocations and totals from those same episodes. The shaded
16--18h price peak gives context for the daily response. The shared grid-cap
cost is accumulated over the whole evaluation.

Under Uniform, Building~1 is below the comfort band for $314$ steps
and Building~2 for $232$. Building~3 is above it for $318$. The first
two buildings are overcooled, while the third tends to run warm.

LiRA moves Building~1's daytime temperatures toward the comfort band.
Building~2, coldest late in the day, also warms: at 18h, within the marked
peak, its mean is $1.17^\circ$C below the setpoint instead of
$1.46^\circ$C. Building~3 begins above the band, so reducing its cooling would not
improve comfort. The same district cost signal reaches buildings
with different thermal needs.

The mean cooling action drops from $0.197$
to $0.184$ in Building~1 and from $0.312$ to $0.287$ in Building~2.
Building~3 also cools less ($0.180$ to $0.169$), but starts from a
warmer state.

The learned shares put more responsibility on the overcooled
buildings: $\rho=0.395$ for Building~1 and $0.387$ for Building~2,
against $0.218$ for Building~3 and $1/3$ each under Uniform.

Episode electricity use drops by $37.6$ and $32.7$ kWh for Buildings~1
and 2, and their cold steps drop to $269$ and $164$. However, the comfort
gains differ: $4.43$k versus $0.48$k. Building~1 also has much higher baseline cold discomfort (586.1 versus 214.5~$^\circ$C-steps), so the reduction in cold discomfort more than offsets the small increase in warm discomfort. Building~3 uses $18.9$ kWh less, with almost no change in
comfort reward ($-0.03$k).

In this seed-1102 episode, these building-level changes increase welfare by
$4.88$k as district cap exceedance falls from $25.42$ to
$23.78$ kWh. (The three electricity reductions sum to $89.2$ kWh.
The smaller change in cap exceedance reflects that shared cost counts
only electricity above the cap at each step.) The welfare improvement in this matched episode arises because buildings for which reduced cooling improves comfort and also relieves the grid receive more responsibility and adjust their behavior accordingly.
Appendix~\ref{app:citylearn-mechanism} provides the reward breakdown.

\subsection{Sensitivity of the outer loop}
\label{sec:hyperparameter-ablation}\label{sec:estimator-evidence}
The shares are learned from a finite training response, so their
quality depends on the outer-loop design: how far ahead to look ($q$),
how many independent lookaheads to average ($M$), and how fast the
shares move ($\eta_\rho$). We vary each one at a time around the
CityLearn setting used in Table~\ref{tab:main-results}, keeping the
other two fixed, with three matched seeds and $u=10$ live PPO updates
per cycle.

\input{sections/tables/outer_loop_sensitivity}
\FloatBarrier

The center $(q,M,\eta_\rho)=(2,8,0.05)$ gives the highest mean welfare
among these local choices ($-14.51$k), with $28.73$ kWh of grid excess
against a $28.76$~kWh budget. Looking three or four steps ahead instead
of two yields $-18.13$k or $-19.03$k, without reducing grid excess.
Four or twelve independent lookaheads also give lower mean welfare than
eight, at similar mean costs.

The responsibility step matters most in this sweep. Leaving it at
$0.025$ gives $-20.67$k and $32.97$ kWh; doubling it to $0.10$ gives
$-15.78$k and $32.75$ kWh. Neither neighbor improves the center's
welfare--cost outcome. This setting produces the district-level
gain in Table~\ref{tab:main-results} and the building responses in
Section~\ref{sec:allocation-stability}.

%% file: sections/tables/experimental_results.tex
\begin{table}[htbp]
\caption{\textbf{Welfare and shared costs across four tasks.}
Held-out mean $\pm$ sample standard deviation over matched training seeds
(three per task; six for MetaDrive). Costs and budgets use the units
shown in each row. Bold marks the highest mean welfare.}
\label{tab:main-results}
\centering
\small
\setlength{\tabcolsep}{3.6pt}
\renewcommand{\arraystretch}{1.16}
\begin{tabular*}{\linewidth}{@{\extracolsep{\fill}}llrrrr@{}}
\toprule
Task $(N,K)$ & Outcome & Budget & Uniform & PAL & LiRA \\
\midrule
CityLearn $(3,1)$ & Welfare ($10^3$) & -- & $-16.87\pm3.04$ & $-16.87\pm3.04$ & $\mathbf{-14.51}\pm1.66$ \\
 & Grid excess (kWh) & $28.76$ & $28.44\pm5.97$ & $28.44\pm5.97$ & $28.73\pm7.48$ \\
\addlinespace[3pt]
MABIM $(400,2)$ & Welfare ($10^6$) & -- & $-596.31\pm5.00$ & $-597.27\pm3.76$ & $\mathbf{-591.36}\pm4.48$ \\
 & Rejections $C_1$ ($10^6$) & $20.35$ & $21.51\pm0.25$ & $21.56\pm0.23$ & $21.30\pm0.18$ \\
 & Rejections $C_2$ ($10^3$) & $32.69$ & $27.71\pm2.31$ & $27.54\pm2.21$ & $28.68\pm0.90$ \\
\addlinespace[3pt]
Harvest $(7,1)$ & Welfare & -- & $42.73\pm12.22$ & $44.73\pm3.79$ & $\mathbf{50.27}\pm2.53$ \\
 & Shared cost & $68.00$ & $33.24\pm10.08$ & $33.38\pm5.18$ & $36.82\pm2.40$ \\
\addlinespace[3pt]
MetaDrive $(4,1)$ & Welfare & -- & $49.72\pm22.20$ & $60.94\pm13.45$ & $\mathbf{64.26}\pm19.98$ \\
 & Events / $10^3$ steps & $1.00$ & $1.017\pm0.343$ & $1.337\pm1.466$ & $0.890\pm0.407$ \\
\bottomrule
\end{tabular*}
\end{table}

%% file: sections/figures/citylearn_mechanism.tex
\begin{figure}[t]
\centering
\includegraphics[width=\linewidth]{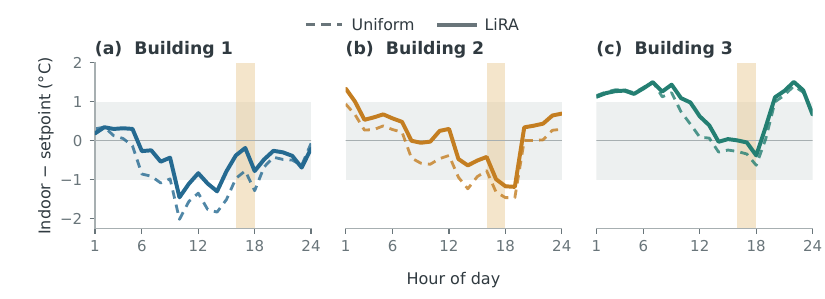}
\vspace{-0.8ex}
\caption{\textbf{Hourly temperature relative to setpoint in CityLearn.}
Each point averages observations at that hour in the matched 719-step
seed-1102 evaluation. Uniform is dashed; LiRA is solid. Gray marks the
$\pm1^\circ$C comfort band, and tan the 16--18h price peak. The
episode totals appear in Table~\ref{tab:citylearn-mechanism}.}
\label{fig:citylearn-mechanism}
\end{figure}

%% file: sections/tables/citylearn_mechanism.tex
\begin{table}[t]
\centering
\caption{\textbf{CityLearn responsibility and episode outcomes.}
Seed-1102 evaluation over 719 steps. Uniform assigns $\rho=1/3$
to each building; changes are LiRA minus Uniform.
Reward is native building comfort reward.}
\label{tab:citylearn-mechanism}
\vspace{0.3ex}
{\small
\begin{tabular*}{0.97\linewidth}{@{\extracolsep{\fill}}lrrr@{}}
\toprule
& Learned $\rho$ & Electricity change (kWh) & Reward change ($10^3$) \\
\midrule
Building 1 & $0.395$ & $-37.6$ & $+4.43$ \\
Building 2 & $0.387$ & $-32.7$ & $+0.48$ \\
Building 3 & $0.218$ & $-18.9$ & $-0.03$ \\
\bottomrule
\end{tabular*}
}
\end{table}

%% file: sections/tables/outer_loop_sensitivity.tex
\begin{table}[htbp]
\caption{\textbf{CityLearn local search around the main setting.}
Each row changes one parameter from $(q,M,\eta_\rho)=(2,8,0.05)$.
Welfare and grid excess are mean $\pm$ sample standard deviation over
three matched seeds. The grid-excess budget is $28.76$ kWh.}
\label{tab:outer-loop-sensitivity}
\centering
\small
\setlength{\tabcolsep}{4pt}
\renewcommand{\arraystretch}{1.14}
\begin{tabular*}{\linewidth}{@{\extracolsep{\fill}}llrr@{}}
\toprule
Varied choice & Setting & Welfare ($10^3$) & Grid excess (kWh) \\
\midrule
Main setting & $(2,8,0.05)$ & $\mathbf{-14.51}\pm1.66$ & $28.73\pm7.48$ \\
\addlinespace[2pt]
Lookahead & $q=3$ & $-18.13\pm4.42$ & $29.34\pm4.67$ \\
 & $q=4$ & $-19.03\pm8.39$ & $31.52\pm4.20$ \\
\addlinespace[2pt]
Replicates & $M=4$ & $-16.02\pm4.05$ & $28.91\pm4.02$ \\
 & $M=12$ & $-16.68\pm3.87$ & $29.12\pm2.38$ \\
\addlinespace[2pt]
Step size & $\eta_\rho=0.025$ & $-20.67\pm7.14$ & $32.97\pm5.65$ \\
 & $\eta_\rho=0.10$ & $-15.78\pm3.76$ & $32.75\pm12.50$ \\
\bottomrule
\end{tabular*}
\end{table}

%% file: sections/conclusion.tex
\section{Conclusion}\label{sec:conclusion}
A shared cost limit constrains what agents may do collectively but
leaves open who adjusts. LiRA makes this division learnable.
At equilibrium, responsibility shares can change welfare
even as active costs stay in their budgets. The finite-training gradient extends this
to nonconvergent learners by estimating welfare's response to a
proposed allocation through independent lookaheads. Across four tasks,
the learned shares give the highest mean welfare among the compared methods. In CityLearn, the welfare gain traces to individual
buildings: the two that overcool receive a larger penalty share, cool
less, and recover comfort while district electricity falls.

%% file: sections/limitations.tex
\section{Limitations}
Theorem~\ref{thm:selection} requires own-concave payoffs, convex
shared costs, and a strongly monotone weighted payoff field with a
fixed active-constraint set.  Deep policy networks violate these
conditions, and our finite-training gradient
(Theorem~\ref{thm:gradient}) does not guarantee welfare
improvement in such settings by itself.  Closing this gap, for instance, by
establishing local improvement guarantees for nonconvex learners
under shared constraints, is a substantial open problem.
The responsibility shares are also constant across states; every
state sees the same allocation.  A state-dependent
$\rho(s)$ could, for example, shift responsibility during
peak-demand hours, but it turns the outer problem from optimizing a
point on the simplex into learning a function, and the welfare
gradient would need to track how the allocation itself reshapes the
state distribution.

%% file: sections/reproducibility_statement.tex
\section*{Code Availability and Reproducibility}
Our code, per-seed result files, and scripts that regenerate the reported
tables are linked from the project page,
\url{https://lira-marl.github.io/}.
Benchmark settings, training seeds, and evaluation protocols are given in
Appendix~\ref{app:experiments}.

%% file: appendix/a_equilibrium_selection.tex
\section{Equilibrium selection}\label{app:selection}
We write the normalized constrained game as a variational inequality,
prove that a regular equilibrium branch moves smoothly with responsibility
while its active shared costs remain fixed, and extend the argument to
constraint-specific responsibility columns.
Two exact games in Appendix~\ref{app:examples} exhibit welfare variation
at fixed active cost and the benefit of separate constraint columns.

\subsection{Normalized game and assumptions}\label{app:selection-assumptions}
Agent $i$ chooses $\theta_i\in\Theta_i\subset\R^{n_i}$, where each
$\Theta_i$ is nonempty, compact, convex, and full-dimensional. Let
$\Theta=\prod_i\Theta_i$ and
\[
\mathcal K=\{\theta\in\Theta:C_k(\theta)\le d_k,\ k=1,\ldots,K\}.
\]
For this subsection, the same positive share $\rho_i$ is used for every
constraint. The effective multiplier in agent $i$'s objective is
$N\lambda_k\rho_i$. Dividing its stationarity condition by the positive
factor $N\rho_i$ leaves a common shared multiplier $\lambda_k$ across
agents. This gives the weighted negative-payoff field
\begin{equation}
 F_\rho(\theta)
 =\col_{i=1}^N\!\left[-\frac{1}{N\rho_i}\nabla_{\theta_i}J_i(\theta)\right].
 \label{eq:weighted-field}
\end{equation}
A normalized equilibrium is a solution of
\begin{equation}
 \langle F_\rho(\theta^*),\theta-\theta^*\rangle\ge0
 \qquad\text{for every }\theta\in\mathcal K.
 \label{eq:vi}
\end{equation}
This is Rosen's positive-weight normalization for a shared-constraint game
\citep{rosen1965existence}.

We use the following conditions.

\begin{enumerate}
\item[\textbf{(A1)}] Each $J_i$ and $C_k$ is $C^2$ on a neighborhood of
$\Theta$. For fixed $\theta_{-i}$, $J_i$ is concave in $\theta_i$, and each
$C_k$ is convex in $\theta$. There exists
$\bar\theta\in\operatorname{int}\Theta$ with
$C_k(\bar\theta)<d_k$ for every $k$.

\item[\textbf{(A2)}] Responsibility ranges over a connected, relatively open
set $\mathcal R$ in the simplex interior. For every $\rho\in\mathcal R$,
the weighted field is strongly monotone: there exists $m_\rho>0$ such that
\begin{equation}
 \langle F_\rho(\theta)-F_\rho(\vartheta),\theta-\vartheta\rangle
 \ge m_\rho\norm{\theta-\vartheta}^2
 \qquad(\theta,\vartheta\in\Theta).
 \label{eq:strong-monotonicity}
\end{equation}

\item[\textbf{(A3)}] On the region under consideration, the active shared
constraints and active local bounds are fixed. Active local bounds can be
written as owner-separable convex $C^2$ inequalities
$h_{i\ell}(\theta_i)\le0$. Their gradients together with those of the active
shared constraints are linearly independent. Active multipliers are positive,
and inactive inequalities remain strictly slack.
\end{enumerate}

The same conditions can be imposed only at a reference point, in which case
the conclusions below hold on a sufficiently small neighborhood.

\subsection{Proof of Theorem~\ref{thm:selection}}\label{app:selection-proof}

\paragraph{Existence and uniqueness.}
The feasible set $\mathcal K$ is nonempty, compact, and convex, and
$F_\rho$ is continuous. For any $\kappa>0$, the projected map
\[
\theta\mapsto\Pi_{\mathcal K}(\theta-\kappa F_\rho(\theta))
\]
maps $\mathcal K$ continuously into itself. Brouwer's theorem therefore gives
a fixed point, and the projection optimality condition is exactly
Eq.~\eqref{eq:vi}. If $\theta^*$ and $\vartheta^*$ are two solutions, adding
their two variational inequalities yields
\[
\langle F_\rho(\theta^*)-F_\rho(\vartheta^*),
        \theta^*-\vartheta^*\rangle\le0.
\]
Strong monotonicity in Eq.~\eqref{eq:strong-monotonicity} forces
$\theta^*=\vartheta^*$. Hence each $\rho$ selects a unique normalized policy.

\paragraph{Smooth dependence on responsibility.}
Let $\calA$ index the active shared constraints and $\mathcal L$ the active
local bounds. Let $G_C$ contain the active shared-cost gradients as rows and
$G_h$ the active local-bound gradients. With local multipliers
$\nu_{\mathcal L}$ and
$z=(\theta,\lambda_{\calA},\nu_{\mathcal L})$, define the active KKT residual
\begin{equation}
 \calG(z,\rho)=
 \begin{bmatrix}
 F_\rho(\theta)+G_C^\top\lambda_{\calA}+G_h^\top\nu_{\mathcal L}\\
 C_{\calA}(\theta)-d_{\calA}\\
 h_{\mathcal L}(\theta)
 \end{bmatrix}=0.
 \label{eq:weighted-kkt}
\end{equation}
Because agent $i$'s weighted stationarity block is scaled by
$1/(N\rho_i)$, Eq.~\eqref{eq:weighted-kkt} is equivalent to its original
stationarity condition with shared multiplier $N\lambda_k\rho_i$.
Own-concavity and convex feasible best-response sets then make these KKT
conditions sufficient for a constrained best response, so the selected
policy is a generalized Nash equilibrium.

Let $Q$ be the derivative in $\theta$ of the first row of
Eq.~\eqref{eq:weighted-kkt}, and let $G=\col(G_C,G_h)$. The Jacobian in the
primal-dual variables is
\begin{equation}
 \partial_z\calG=
 \begin{bmatrix}Q&G^\top\\G&0\end{bmatrix},\qquad
 Q=\partial_\theta F_\rho+
 \sum_{k\in\calA}\lambda_k\nabla^2 C_k+
 \sum_{\ell\in\mathcal L}\nu_\ell\nabla^2 h_\ell.
 \label{eq:bordered}
\end{equation}
The symmetric part of $Q$ is positive definite by strong monotonicity,
convexity, and nonnegative multipliers. Suppose $(u,v)$ lies in the kernel:
$Qu+G^\top v=0$ and $Gu=0$. Then $u^\top Qu=0$ forces $u=0$, and linear
independence of the active gradients gives $v=0$. The Jacobian is nonsingular.

The implicit function theorem gives a locally unique $C^1$ branch
$z^*(\rho)$. Positive active multipliers and strict slackness persist by
continuity. Uniqueness of the normalized policy means overlapping local
branches agree, giving a smooth selection on the full region.

\paragraph{Active costs and welfare.}
For every active shared constraint $k$,
\[
\lambda_k^*(C_k(\theta^*)-d_k)=0.
\]
Assumption (A3) keeps $\lambda_k^*>0$, so
$C_k(\theta^*(\rho))=d_k$ throughout the branch. If welfare has a nonzero
intrinsic derivative with respect to responsibility, then moving a small
distance in the improving direction changes welfare while all active costs
remain at those same budgets. This proves the remaining claims of
Theorem~\ref{thm:selection}.

\subsection{Constraint-specific responsibility columns}\label{app:independent}
The main theorem uses one responsibility vector for all constraints. LiRA
allows each shared constraint to have its own column. Around a regular
primal-dual point, the same argument applies to these independent columns.

Let $\psi$ be nonredundant local coordinates for the product of allocation
simplexes, and write $\rho_{ik}(\psi)$ for the responsibility of agent $i$
for constraint $k$. Define agent $i$'s stationarity field
\begin{equation}
 F_i(\theta,\lambda,\psi)=
 -\nabla_{\theta_i}J_i(\theta)+
 N\sum_k\lambda_k\rho_{ik}(\psi)\nabla_{\theta_i}C_k(\theta).
 \label{eq:independent-field}
\end{equation}
For active local multipliers $\xi$, the residual contains
$F+G_h^\top\xi$, $C_{\calA}-d_{\calA}$, and $h_{\mathcal L}$. Let $H_F$ be
the derivative of its first block in $\theta$, and let $A_R$ be the
derivative in $\lambda_{\calA}$; its $(i,k)$ block is
$N\rho_{ik}\nabla_{\theta_i}C_k$. The corresponding Jacobian is
\begin{equation}
 \mathbf J_{\rm ind}=\begin{bmatrix}
 H_F&A_R&G_h^\top\\
 G_C&0&0\\
 G_h&0&0
 \end{bmatrix}.
 \label{eq:independent-kkt}
\end{equation}

\begin{proposition}[Local independent-column selection]\label{prop:independent}
Suppose a reference primal-dual point satisfies this residual, primal and
dual feasibility, and strict complementarity. Assume the smoothness and
convexity conditions of (A1), smooth owner-local constraints, and
nonsingularity of $\mathbf J_{\rm ind}$. Then a locally unique $C^1$ primal-dual branch exists
for nearby $\psi$. Its policies are generalized Nash equilibria, its active
costs equal the same budgets, and nonzero welfare sensitivity gives different
welfare at those same active costs.
\end{proposition}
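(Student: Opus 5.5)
The argument mirrors the smooth-dependence part of the proof of Theorem~\ref{thm:selection}, with the implicit function theorem now applied directly to the unweighted residual. The weighted-field variational inequality is not available here, because separate columns do not factor into a single positive scaling per agent.

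\textbf{Step 1: the implicit branch.} Write
\[
z=(\theta,\lambda_{\calA},\xi), \qquad
\calG_{\rm ind}(z,\psi)=\bigl(F(\theta,\lambda,\psi)+G_h^\top\xi,\ C_{\calA}(\theta)-d_{\calA},\ h_{\mathcal L}(\theta)\bigr).
\]
In $F$ the inactive multipliers are set to zero. By (A1), each $J_i$ and $C_k$ is $C^2$. The softmax map $\psi\mapsto\rho_{ik}(\psi)$ is smooth. Hence $\calG_{\rm ind}$ is $C^1$ jointly in $(z,\psi)$. Its Jacobian in $z$ at the reference point is exactly $\mathbf J_{\rm ind}$, which is assumed nonsingular. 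The implicit function theorem therefore gives a neighborhood of the reference $\psi$ and a locally unique $C^1$ branch $z^*(\psi)$ with $\calG_{\rm ind}(z^*(\psi),\psi)=0$.

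\textbf{Step 2: the branch stays feasible with the same active set.} Strict complementarity at the reference point means the active multipliers $\lambda_{\calA}$ and $\xi$ are strictly positive. Inactive shared and local constraints are strictly slack there. Both properties are open conditions and $z^*$ is continuous. Shrinking the neighborhood, they persist, so the active set is unchanged along the branch. Combined with the residual equations, this yields primal feasibility, dual feasibility and complementarity along the branch.

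\textbf{Step 3: each branch point is a generalized Nash equilibrium.} Fix $\theta_{-i}=\theta^*_{-i}(\psi)$ and consider agent $i$'s problem. Agent $i$ maximizes $J_i$ over $\theta_i\in\Theta_i$ subject to $C_k(\theta_i,\theta^*_{-i})\le d_k$ for all $k$. The $i$-th block of the residual reads
\[
-\nabla_{\theta_i}J_i+\sum_k \bigl(N\lambda_k\rho_{ik}\bigr)\nabla_{\theta_i}C_k+\sum_\ell \xi_{i\ell}\nabla h_{i\ell}=0,
\]
where only agent $i$'s own local bounds appear, by owner-separability. The multipliers $N\lambda_k\rho_{ik}$ and $\xi_{i\ell}$ are nonnegative. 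Together with complementarity from Step 2, this is the KKT system of agent $i$'s problem. By (A1), $-J_i$ is convex in $\theta_i$. Each $C_k(\cdot,\theta^*_{-i})$ is convex as a restriction of a convex function, and each $h_{i\ell}$ is convex. The KKT conditions are therefore sufficient, so $\theta^*_i$ is a constrained best response. This holds for every $i$, so $\theta^*(\psi)$ is a generalized Nash equilibrium.

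\textbf{Step 4: active costs and welfare.} The second residual block gives $C_k(\theta^*(\psi))=d_k$ for every $k\in\calA$ along the branch. The welfare map $\psi\mapsto W(\theta^*(\psi))$ is $C^1$. If its derivative is nonzero at the reference point, moving a small distance along a direction with nonzero directional derivative changes welfare. The active costs stay at their budgets throughout, which completes the claim.

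The main obstacle is Step 3. The other steps transfer verbatim from Appendix~\ref{app:selection-proof}, but there the equilibrium property came partly from the variational-inequality structure. Here it must be checked agent by agent, using convexity in the agent's own block, nonnegativity of the column-specific multipliers, and owner-separability of the local bounds. Without owner-separability, foreign bounds would enter agent $i$'s KKT system. A second caution concerns uniqueness. Unlike Theorem~\ref{thm:selection}, there is no strong monotonicity assumption, so uniqueness and the resulting equilibrium property are only local to the branch and cannot be globalized.
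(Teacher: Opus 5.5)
Your proposal is correct and follows the paper's proof essentially step for step. The paper applies the implicit function theorem to the full residual with nonsingular $\mathbf J_{\rm ind}$, uses continuity to preserve multiplier signs and inactive slackness, and invokes own-concavity to make the KKT points constrained best responses; complementarity then pins the active costs, and the welfare derivative gives the last claim. Your Step~3 spells out what the paper compresses into one clause: owner-separability keeps other agents' local bounds out of agent $i$'s KKT system, and the nonnegative coefficients $N\lambda_k\rho_{ik}$ make those conditions sufficient.
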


\begin{proof}
Apply the implicit function theorem to the full residual. Continuity preserves
multiplier signs and inactive slackness, while own-concavity makes the
resulting KKT points constrained best responses. Complementarity pins every
active cost at its budget, and the welfare derivative gives the final claim.
\end{proof}

If equality of the entire cost vector is required when some constraints are
inactive, let $Q_I(\psi)=C_I(\theta^*(\psi))$ collect those inactive costs.
When $\partial_\psi Q_I$ has locally constant rank, its level sets are smooth.
A tangent direction in $\ker(\partial_\psi Q_I)$ with nonzero welfare
derivative therefore generates a local curve along which the full cost vector
is fixed and welfare changes.

%% file: appendix/b_exact_examples.tex
\subsection{Exact examples}\label{app:examples}
Two small games make the equilibrium selection result concrete. The first
shows welfare varying along an active-cost level set. The second shows
why different constraints may prefer different responsibility allocations.

\subsubsection{Welfare variation at fixed active cost}\label{app:welfare-example}
Let $N\ge2$ agents choose scalar actions $u_i\in\R$. Agent $i$ prefers action
$b_i$, and the shared budget is $d>0$. Let $\alpha_i>0$ be the fixed welfare
weight for agent $i$, set $B=\sum_i b_i$ and $S=\sqrt{2d}$, and assume
$B>S$. Define
\begin{equation}
 J_i(u)=-\tfrac12(u_i-b_i)^2,\qquad
 C(u)=\tfrac12\Big(\sum_i u_i\Big)^2\le d.
 \label{eq:scalar-example}
\end{equation}
Compact action intervals containing the response family below can be imposed
without affecting the solution. For any simplex allocation, the
positive-aggregate active equilibrium is
\begin{equation}
 \lambda^*=\frac{B-S}{NS}>0,\qquad
 u_i^*(\rho)=b_i-(B-S)\rho_i,\qquad
 C(u^*(\rho))=d.
 \label{eq:scalar-response}
\end{equation}
To see this, stationarity gives
$b_i-u_i=N\lambda\rho_i\sum_j u_j$. Summing across agents and using the
active constraint $\sum_i u_i=S$ gives Eq.~\eqref{eq:scalar-response}.
The weighted field is diagonal and strongly monotone, so this family
satisfies the assumptions of Theorem~\ref{thm:selection}.

Substituting the equilibrium into welfare gives
\begin{equation}
 W(u^*(\rho))
 =-\tfrac12(B-S)^2\sum_i\alpha_i\rho_i^2,
 \qquad
 \rho_i^*=\frac{\alpha_i^{-1}}{\sum_j\alpha_j^{-1}}.
 \label{eq:inverse-weight}
\end{equation}
The active cost is identical for every allocation, yet welfare is not.
Unequal welfare weights make the uniform allocation strictly suboptimal;
with equal weights, uniform is optimal but welfare still changes away from
it.

\subsubsection{Why separate constraint columns can matter}\label{app:tied-example}
Now consider two agents with two-dimensional actions
$u_1=(u_{11},u_{12})$ and $u_2=(u_{21},u_{22})$. Both prefer
$b=(1,1)$, and the shared constraints are
$C_k(u)=u_{1k}+u_{2k}\le1$ for $k=1,2$. Let
\begin{align}
 J_1(u)&=-\tfrac12\norm{u_1-b}^2+(1/4,-1/4)^\top u_2,\\
 J_2(u)&=-\tfrac12\norm{u_2-b}^2+(-1/4,1/4)^\top u_1,
 \label{eq:externality-example}
\end{align}
with welfare $W=J_1+J_2$. Let $p_k$ be agent 1's responsibility share for
constraint $k$; agent 2 receives $1-p_k$. The equilibrium is
\begin{equation}
 \lambda_1^*=\lambda_2^*=\tfrac12,\qquad
 u_1^*=(1-p_1,1-p_2),\qquad
 u_2^*=(p_1,p_2).
 \label{eq:two-column-response}
\end{equation}
The two stationarity equations in each coordinate first determine the shared
multiplier and then the individual actions. The losses are strongly convex
and the constraints are linear, so these points are global constrained best
responses. Both shared costs equal one for every $(p_1,p_2)\in[0,1]^2$.

At this equilibrium,
\begin{equation}
 W(p_1,p_2)
 =-\tfrac12\left[p_1^2+(1-p_1)^2+p_2^2+(1-p_2)^2\right]
  +\tfrac12(p_1-p_2).
 \label{eq:two-column-welfare}
\end{equation}
Its Hessian is $-2I_2$, so the unique optimum is
$(p_1,p_2)=(3/4,1/4)$ with $W=-3/8$. If the two responsibility columns are
tied, $p_1=p_2=p$, then
$W=-(p^2+(1-p)^2)\le-1/2$, with equality at $p=1/2$. Independent columns
therefore improve the best tied welfare by $1/8$ without changing either
shared cost. The two constraints prefer opposite allocations because the
reward externalities have opposite signs across coordinates.

%% file: appendix/c_finite_gradient.tex
\section{Finite-training responsibility gradient}\label{app:gradient}
Theorem~\ref{thm:gradient} differentiates welfare through a random sequence
of learner updates. The proof conditions on the sampled training history,
differentiates welfare through the recorded updates at fixed data, and then
differentiates the probability of that history. These two derivatives yield
the direct-unrolling and sampling-correction terms.

\subsection{Training-history distribution}
Let $\phi\in\calU\subset\R^p$ be the allocation parameter and let $x_0$ be
fixed during one outer update. Write
$\mathbf D=(\calD_0,\ldots,\calD_{q-1})$ for the complete sequence of
training batches. Given a realized history, the learner states
$x_t(\phi;\calD_{0:t-1})$ are deterministic. For fresh batches, the joint
sampling law is
\begin{equation}
 p_\phi(\mathbf D\mid x_0)
 =\prod_{t=0}^{q-1}
 p_{\theta_t(\phi;\calD_{0:t-1})}(\calD_t).
 \label{eq:joint-data}
\end{equation}
With a persistent collector, the same argument uses the corresponding
history-conditioned sampling factors.

To separate the two sources of dependence, let $\Dfix$ denote the total
derivative $\dd/\dd\phi$ in Eq.~\eqref{eq:gradient}, taken while the
realized raw trajectories are held fixed. Observations, actions, rewards,
costs, terminal flags, and
other exogenous samples are fixed; learner quantities derived from them are
not. For a scalar, $\Dfix$ is a column gradient. In particular,
\begin{equation}
 \Dfix W(\theta_q)
 =(\Dfix\theta_q)^\top\nabla_\theta W(\theta_q),\qquad
 s_t=(\Dfix\theta_t)^\top
 \nabla_{\theta_t}\log p_{\theta_t}(\calD_t).
 \label{eq:chain-rules}
\end{equation}

The proof assumes that the conditional sampling laws share
parameter-independent support, the recorded learner maps and welfare are
differentiable almost surely, and differentiation can be exchanged with
integration. For piecewise-smooth learner updates, these conditions are
understood on the realized differentiable branches.

\subsection{Proof of Theorem~\ref{thm:gradient}}
Using Eq.~\eqref{eq:joint-data},
\begin{align}
 \nabla_\phi V_q
 &=\nabla_\phi\int
   W(\theta_q(\phi;\mathbf D))\,p_\phi(\mathbf D\mid x_0)\,\dd\mathbf D\\
 &=\int\left[
   \Dfix W(\theta_q)
   +W(\theta_q)\Dfix\log p_\phi(\mathbf D\mid x_0)
   \right]p_\phi(\mathbf D\mid x_0)\,\dd\mathbf D\\
 &=\E\!\left[
   \Dfix W(\theta_q)
   +W(\theta_q)\sum_{t=0}^{q-1}s_t
   \right].
 \label{eq:gradient-proof}
\end{align}
The first term differentiates welfare through the learner for a fixed
training history. The second differentiates the probability of seeing that
history. This proves Eq.~\eqref{eq:gradient}.

\subsection{Policy-score form}\label{app:training-score}
The sampling correction can be evaluated without differentiating the
environment. Let $P$ denote the transition law and $p_0$ the reset law. For a
trajectory with states $\mathsf{z}_h$, local observations $o_{i,h}$, and joint actions
$a_h$,
\begin{equation}
 p_\theta(\tau)
 =p_0(\mathsf{z}_0)\prod_{h=0}^{H-1}
 \left[
 P(\mathsf{z}_{h+1}\mid\mathsf{z}_h,a_h)
 \prod_{i=1}^N\pi_{\theta_i}(a_{i,h}\mid o_{i,h})
 \right].
 \label{eq:trajectory-law}
\end{equation}
At a fixed realized trajectory, the environment factors do not depend on
$\phi$. The batch score therefore reduces to policy scores:
\begin{equation}
 s_t
 =\sum_{\tau\in\calD_t}\sum_{h\in\tau}\sum_{i=1}^N
 \Dfix\log\pi_{\theta_{t,i}}(a_{i,h}\mid o_{i,h}).
 \label{eq:joint-score}
\end{equation}
Each sampled action appears once, even if the same batch is reused for
multiple policy-optimization epochs.

\subsection{Terminal welfare estimate}\label{app:terminal}
For a terminal rollout $\tau$, let
\[
R_W(\tau)=\sum_{h=0}^{H-1}\gamma^h\sum_i\alpha_i r_{i,h}
\]
be its original social return. Given the training history, draw
$E\ge1$ fresh terminal rollouts $\tau_1,\ldots,\tau_E$ under
$\pi_{\theta_q}$ and define
\begin{equation}
 \widehat W
 =\frac1E\sum_{e=1}^E R_W(\tau_e),\qquad
 \widehat g^{\DU}
 =\frac1E\sum_{e=1}^E
 R_W(\tau_e)(\Dfix\theta_q)^\top
 \nabla_\theta\log p_{\theta_q}(\tau_e).
 \label{eq:terminal-estimator}
\end{equation}
Conditioned on the training history, these estimate
$W(\theta_q)$ and $\Dfix W(\theta_q)$, respectively. In a general-sum game,
the social return weights every agent's policy score, preserving the effect
of one agent's action on the others' rewards.

\subsection{Leave-one-out centering}\label{app:loo}
For independent continuations $m=1,\ldots,M$ from the same checkpoint, let
$s^{(m)}=\sum_{t=0}^{q-1}s_t^{(m)}$ and let $b_m$ be the baseline in
Eq.~\eqref{eq:loo}. A score has conditional mean zero:
\begin{equation}
 \E[s^{(m)}\mid x_0,\phi]
 =\int\nabla_\phi p_\phi(\mathbf D\mid x_0)\,\dd\mathbf D=0.
\end{equation}
Because $b_m$ is constructed only from the other independent continuations,
$\E[b_ms^{(m)}\mid x_0,\phi]=0$. Together with the conditional unbiasedness
of the terminal estimates,
\begin{equation}
 \E[\widehat g\mid x_0,\phi]
 =\nabla_\phi V_q(\phi\mid x_0).
 \label{eq:loo-unbiased}
\end{equation}
Thus leave-one-out centering changes neither term in expectation. The
continuation estimates need not be mutually independent after centering,
because their baselines share the other continuations' returns; only the
independence between a continuation's score and its own baseline is used.

%% file: appendix/e_implementation.tex
\section{Lookahead implementation}\label{app:implementation}

\subsection{PPO and critic updates}
Let $\vartheta_i$ denote agent $i$'s actor parameters within a PPO epoch and
$\theta_i^{\mathrm{beh}}$ the parameters used to collect the batch.
Let $\widehat A^R_{i,h}$ and $\widehat A^C_{k,h}$ be the reward and
shared-cost advantages. For a fixed responsibility allocation, the
effective advantage
$\widehat A^{\mathrm{eff}}_{i,h}$ and policy ratio $\zeta_{i,h}$ are
\begin{equation}
 \widehat A^{\mathrm{eff}}_{i,h}
 =\widehat A^R_{i,h}-\sum_k N\lambda_k\rho_{ik}(\phi)\widehat A^C_{k,h},\qquad
 \zeta_{i,h}(\vartheta_i)=
 \frac{\pi_{\vartheta_i}(a_{i,h}\mid o_{i,h})}
      {\pi_{\theta_i^{\mathrm{beh}}}(a_{i,h}\mid o_{i,h})}.
 \label{eq:advantages}
\end{equation}
Let $\epsilon_{\mathrm{PPO}}>0$ be the clipping radius,
$c_{\mathrm{ent}}\ge0$ the entropy coefficient, $\mathcal H$ policy entropy,
and $\widehat\E$ a masked average over valid samples. The clipped loss is
\begin{equation}
 \widehat{\mathcal L}_i(\vartheta_i)=
 -\widehat\E\!\left[\min\left\{
 \zeta_{i,h}(\vartheta_i)\widehat A^{\mathrm{eff}}_{i,h},
 \clip(\zeta_{i,h}(\vartheta_i),1-\epsilon_{\mathrm{PPO}},1+\epsilon_{\mathrm{PPO}})
 \widehat A^{\mathrm{eff}}_{i,h}\right\}\right]
 -c_{\mathrm{ent}}\widehat\E[\mathcal H(\pi_{\vartheta_i})].
 \label{eq:ppo}
\end{equation}
PPO clips the combined reward-and-cost advantage
\citep{schulman2017ppo}. In CityLearn, the reward advantages are the
per-building, per-step comfort rewards standardized within each batch;
the cost advantages are the district cap excess standardized over that
batch. The critics regress to the corresponding unnormalized per-step
sum of building rewards and district cap excess.

\subsection{What the lookahead differentiates}\label{app:lookahead-differentiates}
The CityLearn probe starts from copies of the live actors, critics, and
Adam moments. Each fresh batch gives one functional PPO actor--critic
update and one differentiable Adam step. The next batch is collected
under the updated probe policy. The shared multiplier is held at its
checkpoint value during this $q$-step probe to isolate responsibility
redistribution at the current dual scale; it is updated in the live
learner on the schedule specified in Appendix~\ref{app:experimental-protocol}.

Within a probe update, stored behavior log-probabilities, normalized
advantages, and critic targets are held fixed. Differentiation follows
the actor and critic updates and the carried Adam moments, while the
training-history score accounts for actions collected under the changing
probe policies. The PPO implementation therefore uses a stopped-batch
approximation to the exact direct derivative in
Theorem~\ref{thm:gradient}: its sampling correction does not restore
derivatives omitted from the update itself.

\subsection{Allocation and online updates}\label{app:allocation-implementation}
For logits $\phi_{ik}$ and a fixed floor $\varepsilon_\rho\in[0,1/N)$,
\begin{equation}
 \rho_{ik}(\phi)=\varepsilon_\rho+
 (1-N\varepsilon_\rho)\frac{\exp\phi_{ik}}{\sum_j\exp\phi_{jk}}.
 \label{eq:floor}
\end{equation}
The floor keeps shares away from zero; setting it to zero recovers softmax.
Adding a constant to a logit column leaves its allocation unchanged.
The equilibrium proofs use nonredundant local simplex coordinates, whereas
autodiff can operate directly on the logits. An ascent optimizer applies
Eq.~\eqref{eq:loo} to update $\phi$.

Algorithm~\ref{alg:lira} uses a lookahead from the current checkpoint.
Each of its $M$ independent continuations takes $q$ probe updates at fixed
responsibility, collecting fresh training batches and terminal rollouts.
After the outer step, the live learner takes $u$ updates from that
checkpoint under the new allocation, producing the full learner state $x^+$.
Held-out evaluation
measures its policy without changing that state. The next cycle uses
$x_{\ell+1}$ (written $x^+$ above) as its checkpoint and retains the updated $\phi$, carrying forward the
actors, critics, optimizer state, and shared multipliers from the live
continuation.

\subsection{Computation and evaluation}
The terminal rollouts used in the meta-gradient train $\phi$; separate
reporting rollouts evaluate the resulting policy. Appendix~\ref{app:experimental-protocol}
records benchmark settings and evaluation conventions for Table~\ref{tab:main-results}.
A responsibility-update cycle uses $Mq$ probe batches and $u$ live
batches, in addition to terminal and held-out evaluation rollouts.
The lookahead cost is part of the method's computation budget.

%% file: appendix/f_experiments.tex
\section{Experimental protocols and reward components}\label{app:experiments}

\subsection{Benchmark setup for Section~\ref{sec:main-results}}
\label{app:experimental-protocol}
We use the same training seed identifiers across methods within each task:
three seeds, except six for MetaDrive. We first summarize the held-out
evaluations within each seed; Table~\ref{tab:main-results} reports the mean
and sample standard deviation across these seed-level values. Costs and
welfare retain each benchmark's native units.

\paragraph{CityLearn.}
We use CityLearn v2 \citep{nweye2024citylearn} with the three buildings in
\nolinkurl{citylearn_challenge_2023_phase_1}. Welfare is the sum of native
ComfortReward across buildings and steps. Shared cost accumulates only the
positive excess of district net electricity over the per-step grid cap.
The $d=0.04$ experiments have a 719-step cap-exceedance budget of
$28.76$ kWh ($719\times0.04$) and use seeds 1101--1103, 30 outer cycles,
$u=10$ live PPO updates per cycle, 12-step live rollouts, four PPO epochs,
shared-dual initialization $2.0$, and dual learning rate $0.05$.
For each CityLearn lookahead, every probe step collects a fresh 12-step
batch and makes one functional PPO/Adam actor--critic update. The shared
multiplier stays fixed during the probe. A separate 12-step terminal
rollout supplies the mean per-step comfort reward for the allocation
update; its action scores and the training-batch scores are both divided
by the 12-step batch length. Before the outer optimizer, this scales
Eq.~\eqref{eq:loo}'s unnormalized score estimate by $1/12$.
After the allocation update, the live
learner instead takes $u=10$ fresh batches with four PPO epochs each,
then updates the shared multiplier once from their mean grid excess.
That dual update compares mean per-step excess against $0.04$ kWh.
The held-out values in Table~\ref{tab:main-results} are 719-step
episode totals, rather than the probe's short-rollout objective.
The LiRA reference point uses $(q,M,\eta_\rho)=(2,8,0.05)$ in both
Table~\ref{tab:main-results} and Table~\ref{tab:outer-loop-sensitivity}.
The local search changes one coordinate at a time on the same three training
seeds, keeping $u=10$ live updates per outer cycle.

In this setting, PAL starts with three equal per-building multipliers. Their
updates use the same district cost and budget, so they remain equal and
produce the same result as Uniform.

\paragraph{MABIM.}
The inventory experiment has $N=400$ controlled SKU agents and $K=2$ shared
rejection-cost coordinates: warehouse 0/store 2 and warehouse 1/store 1.
For Table~\ref{tab:main-results}, we use training seeds 1101--1103, $q=5$,
$M=6$ lookaheads, ten outer cycles (50 committed updates), and budgets
calibrated on a validation period separate from the held-out test period.
Held-out evaluation pairs the same eight action seeds across LiRA, Uniform,
and PAL.
Each training seed contributes the mean of these eight held-out rollouts to
Table~\ref{tab:main-results}; the displayed spread is across training seeds.

\paragraph{Harvest.}
The Harvest task uses seven agents on the Melting Pot~\citep{agapiou2022melting}
substrate \nolinkurl{commons_harvest__open}, built on
Lab2D~\citep{beattie2020lab2d}, and one episode-level shared cost with budget $68$.
The three training seeds are 10411, 10442, and 10473. The LiRA setting
uses 60 response updates, $q=10$, $M=4$, and responsibility step size
$0.15$, with an allocation update every five response updates. The shared
dual starts at $0.3$ and uses step size $0.05$. Each training seed is
evaluated on five held-out episodes with native stochastic resets.

\paragraph{MetaDrive.}
MetaDrive uses six matched training seeds (211--216), a shared dual
learning rate $\lambda_{\rm lr}=4.0$ across all three arms, and the active
training-time cost budget $d=0.001$ per actual environment step (shown in
Table~\ref{tab:main-results} as $d=1.00$ events per 1000 actual held-out
steps, the same threshold on that display scale). All three arms start
from the same competent fixed actor checkpoint and are evaluated on the
same 13 held-out episode seeds; welfare is the mean per-episode welfare
and cost is the number of native shared-cost events per 1000 steps
actually run.
Within each training seed, we average welfare over the 13 episodes and
divide the total event count by their total realized steps to obtain the
cost rate. Table~\ref{tab:main-results} then aggregates these six seed-level
values.

\subsection{CityLearn mechanism details}\label{app:citylearn-mechanism}

\begin{table}[h]
\caption{\textbf{CityLearn comfort components in the matched seed-1102 episode.}
Cold excess is measured under Uniform; reward changes are LiRA minus Uniform.}
\label{tab:citylearn-reward-components}
\centering
\begin{tabular}{lrrrr}
\toprule
& Cold excess & $\Delta R_{\rm cold}$ & $\Delta R_{\rm warm}$ &
$\Delta R_{\mathrm{in\text{-}band}}$\\
& ($^\circ$C$\cdot$steps) & & &\\
\midrule
Building 1 & 586.1 & +4528.01 & $-102.80$ & +6.55\\
Building 2 & 214.5 & +813.09 & $-340.49$ & +6.26\\
Building 3 & 70.2  & +159.58 & $-205.55$ & +11.62\\
\bottomrule
\end{tabular}
\end{table}

Figure~\ref{fig:citylearn-mechanism} uses seed 1102 with
$q=2$, $M=8$, and $\eta_\rho=0.05$. Comfort states are classified against
the effective $\pm1^\circ$C ComfortReward band.
Table~\ref{tab:citylearn-reward-components} separates the cold, warm, and
in-band contributions to the comfort changes in
Table~\ref{tab:citylearn-mechanism}.

Building~1 begins with the greatest cold burden, so its reduction in
cooling yields the largest comfort recovery. Across all three buildings,
episode welfare increases by $4876.26$ while district cap excess moves from
$25.4156$ to $23.7787$ kWh.